\title[Synergistic Computational Graph Effects in Multi-Head Attention]{Beyond Parallelism: Synergistic Computational Graph Effects in Multi-Head Attention}
  \author{\Name{Haitz Sáez~de~Ocáriz~Borde} \Email{ocariz@supermodel.ai} \\
  \addr Supermodel
}
\begin{document}

\maketitle

\begin{abstract}
Multi-head attention powers Transformer networks, the primary deep learning architecture behind the success of large language models~(LLMs). Yet, the theoretical advantages of multi-head versus single-head attention, beyond mere parallel processing, remain underexplored. In this paper, we reframe multi-head attention as a system of potentially synergistic computational graphs, where each head functions as a feedforward directed acyclic graph (DAG) with a common sink state. We provide intuition and preliminary theoretical analysis of mixing time and minimax fidelity in this framework. Our results show that multi-head attention can synergistically enhance information propagation, yielding faster mixing times and minimax fidelity amplification under specific head-diversity conditions. Finally, we train single-head and multi-head Transformers, each with the same total number of parameters, on sequence manipulation tasks and empirically verify the predicted effects. The code is available at \url{https://github.com/haitzsaezdeocariz/beyondparallelism}.
\end{abstract}
\begin{keywords}
Attention, Transformer, Graph Theory, Directed Acyclic Graph, Computational Graph, Markov Chain, Mixing Time, Minimax Fidelity, Signal Propagation 
\end{keywords}

\vspace{-5pt}
\section{Introduction}

In this paper, we adopt a graph‐theoretic lens to interpret how multi‐head attention processes and propagates information. By modeling each head as a feedforward computational graph, we expose the synergistic computational pathways enabled by parallel attention and introduce concrete metrics (mixing time and minimax fidelity). We anticipate that these insights may inspire future actionable interpretability techniques, empowering practitioners to diagnose, explain, and optimize attention-based architectures, while also helping us understand how specific behaviors and computations arise.

\begin{wrapfigure}[11]{r}{0.3\linewidth}
    \centering
    \includegraphics[width=\linewidth]{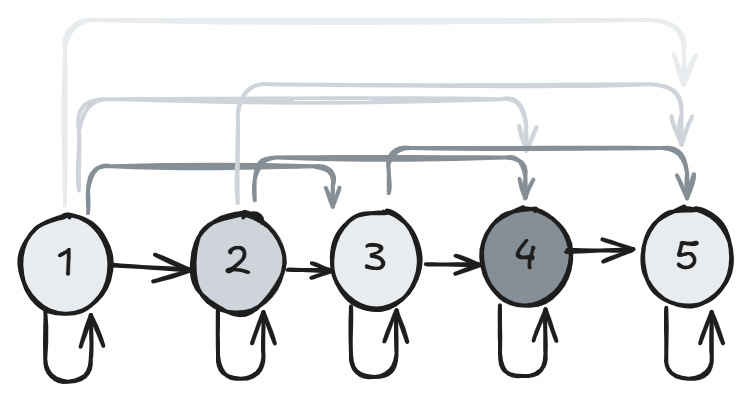}
    \caption{Example feedforward DAG with $n=5$ nodes.}
    \label{fig:ff}
\end{wrapfigure}

In a standard (decoder-only) Transformer with causal (masked) attention~\citep{Vaswani2017AttentionIA,Brown2020LanguageMA}, which is omnipresent in language modeling among other applications, each position in the sequence can only attend to itself and to earlier positions. Such dependency can be abstracted via a structure with edges going strictly forward in time: one can view each head's pattern of attention as a feedforward directed acyclic graph (DAG) on \(n\) nodes (with the first node having no incoming edges other than its self-loop). The position \(n\) is the DAG's sink, since it has edges coming in from all previous positions but does not feed forward to any later positions, see Figure~\ref{fig:ff}.

In~\cite{what}, the authors use a DAG to model the entire forward pass of the network. On the other hand, in this paper, we will use a similar argument to compare the (intra-layer) computational graph of \textit{single-head} attention against that of the ubiquitous \textit{multi-head} attention, where multiple distinct dynamic graphs (the graph connectivity strength is input dependent) are used for simultaneous information propagation. We theoretically study \textit{mixing time} and \textit{minimax fidelity}. Additionally, we complement our findings by training single-head and multi-head Transformers on toy datasets and computing empirical proxies for these quantities. We indeed find that multi-head attention exhibits improved performance as compared to single-head using the same number of trainable parameters.

\vspace{-10pt}
\section{Multi-Head Mixing Time}

Briefly, the mixing time is defined as the time it takes for a probability distribution over states to converge to a stationary distribution. In \cite{what}, the authors argue that the mixing time is small if and only if there are numerous paths from most nodes in the graph to the sink. They also claim that since having many paths facilitates efficient data propagation, the mixing time serves as a meaningful measure of how efficiently our network will operate under this computational graph. Hence, a lower mixing time would be better since it generally implies statistical efficiency. In this section we explore the following question: \textit{Does mixing time improve (go down) when we combine multiple parallel feedforward computational graphs?}
\vspace{-10pt}
\subsection{Multi-Head Stationary Distributions}
\label{subsec:Multi-head Stationary Distributions}

\begin{definition}[Feedforward graph] A \emph{feedforward graph} on $n$ vertices is a DAG in which 
the vertices can be indexed $1,2,\dots,n$ so that all edges $(j \to i)$ satisfy $j \le i$.
\end{definition}

\begin{definition}[Unique sink] A \emph{unique sink} $\tau$ (at position $n$) is a vertex with no outgoing edges to \emph{distinct} vertices 
(i.e.\ the only possible edge is the self-loop $(\tau \to \tau)$). 
\end{definition}

\begin{definition}[Random walk matrix] Given a directed graph $G = (V,E)$ on $n$ vertices, the \emph{random walk matrix} $W$ 
is the $n \times n$ matrix with entries
\begin{equation}
  W_{ij} \;=\;
  \begin{cases}
    \frac{1}{\delta_j^{\to}}, & \text{if }(j \to i) \in E,\\
    0, & \text{otherwise},
  \end{cases}
\end{equation}
where $\delta_j^{\to}$ is the outdegree of node $j$. For causal attention this matrix is lower-diagonal. 
\end{definition}

\begin{definition}[Stationary distribution]
A probability distribution $\pi \in \mathbb{R}^n$ on the vertices is called a \emph{stationary distribution} 
if $W \,\pi = \pi$. 
\end{definition}

\begin{lemma}[Stationary Distribution for a Single-Head Unique Sink]
  \label{lem:single-head}
  Let $G$ be a feedforward graph on $n$ vertices with a unique sink $\tau$. 
  Then the only stationary distribution for the random walk matrix $W$ is $1_{\tau}$, 
  the distribution taking value $1$ at $\tau$ and $0$ elsewhere~\citep{what}.
  \label{single-head}
\end{lemma}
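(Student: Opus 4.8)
The plan is to exploit the feedforward ordering to put $W$ into triangular form, read off its spectrum, and conclude that the eigenvalue $1$ is simple. \textbf{Existence.} First I would verify directly that $1_\tau$ is stationary: by the unique-sink definition the only edge leaving $\tau$ is the self-loop, so the $\tau$-th column of $W$ is exactly $1_\tau$, whence $W1_\tau$ (which is that column) equals $1_\tau$. The same bookkeeping shows $W$ is well defined and column-stochastic — every non-sink vertex has an edge to a distinct vertex and $\tau$ carries its self-loop, so all outdegrees $\delta_j^{\to}$ are positive, and the nonzero entries in column $j$ are $\delta_j^{\to}$ copies of $1/\delta_j^{\to}$.

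\textbf{Uniqueness.} Next I would use the indexing $1,\dots,n$ from the feedforward definition, with $\tau=n$. Since every edge $(j\to i)$ satisfies $j\le i$, the entry $W_{ij}$ can be nonzero only when $i\ge j$, so $W$ is lower triangular and its eigenvalues are the diagonal entries $W_{jj}$. Here $W_{jj}=1/\delta_j^{\to}$ if $j$ carries a self-loop and $0$ otherwise. For $j=\tau$ this equals $1$. For $j<n$ the unique-sink hypothesis forces an outgoing edge from $j$ to some distinct vertex, so either $j$ has no self-loop and $W_{jj}=0$, or $j$ has a self-loop and $\delta_j^{\to}\ge 2$, giving $W_{jj}\le\tfrac12$; in both cases $W_{jj}<1$. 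Hence $1$ occurs exactly once on the diagonal, so it is an eigenvalue of algebraic multiplicity one, therefore of geometric multiplicity one. Thus $\ker(W-I)$ is one-dimensional; by the existence step it is spanned by $1_\tau$, so any stationary $\pi$, being a probability vector lying on this line, must equal $1_\tau$.

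I expect no serious obstacle — the argument is routine linear algebra — and the single point demanding care is the word \emph{unique} in ``unique sink.'' It is precisely this hypothesis that rules out a vertex of outdegree $0$ (which would make $W$ ill-defined) and that guarantees every diagonal entry other than $W_{\tau\tau}$ is strictly below $1$: a second sink $\sigma$ would contribute another $W_{\sigma\sigma}=1$, enlarge $\ker(W-I)$, and break uniqueness. As a sanity check one can also argue probabilistically: $W^{\top}$ is the transition matrix of the random walk that follows edges, $\tau$ is its unique absorbing state, and every other vertex reaches $\tau$ because iterating distinct-vertex edges in a finite DAG must terminate at a sink and $\tau$ is the only one — so all non-sink states are transient and the unique stationary law is $1_\tau$.
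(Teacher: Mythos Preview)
Your proof is correct. The paper's argument differs in packaging: for uniqueness it picks the smallest index $j$ with $\pi_j\neq0$, notes that $j$ (not being the sink) has an edge to some $i>j$, and says the mass at $j$ ``leaks forward'' under $W$, contradicting $W\pi=\pi$. Made precise, that step is just $(W\pi)_j=W_{jj}\pi_j<\pi_j$---exactly your diagonal observation that $W_{jj}<1$ for $j\neq\tau$. So the two proofs rest on the same inequality; the paper phrases it as a one-line combinatorial contradiction, whereas you recast it as a spectral statement about a lower-triangular matrix and invoke algebraic versus geometric multiplicity. Your route is slightly longer but more explicit---you verify well-definedness and column-stochasticity of $W$, which the paper leaves implicit, and the eigenvalue framing makes transparent why ``unique'' is essential (a second sink would place another $1$ on the diagonal and enlarge $\ker(W-I)$). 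The paper's route is terser and closer to the random-walk intuition.
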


As discussed in~\cite{what}, clearly $W\,1_{\tau} = 1_{\tau}$ because $\tau$ has no outgoing edges to other nodes (only possibly a self-loop). In terms of uniqueness, suppose $\pi$ is another stationary distribution with $\pi \neq 1_{\tau}$. Let $j < n$ be the smallest-index vertex with $\pi_j \neq 0$. Because $G$ is feedforward, $j$ has at least one outgoing edge (apart from its self-loop) $(j \to i)$ with $i > j$ unless $j=\tau$. This implies $\pi_j$ will ``leak forward'' under $W$, contradicting stationarity. Thus no such $\pi$ can exist.

Next, we extend the unique-sink argument in~\cite{what}. Specifically, we adapt the original derivation used for the full computational graph to model a single attention head in our analysis (as in Lemma~\ref{single-head}) and then compare it to the multi-head attention setting. 

\begin{lemma}[Stationary Distribution for a Multi-Head Unique Sink]
  \label{lem:multi-head}
  \\ Let $G^{(1)},\dots,G^{(H)}$ be $H$ feedforward graphs on $n$ vertices, 
  each having the \emph{same} unique sink $\tau$. Suppose the final operation merges the heads (via concatenation and a linear projection or any acyclic merging). 
  Then the only stationary distribution (over the combined state space) 
  is the one that places all its mass at $\tau$.
\end{lemma}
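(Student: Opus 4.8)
The plan is to reduce the multi-head statement to the single-head one, \Cref{lem:single-head}, by packaging the $H$ heads together with the (acyclic) merge into a single feedforward graph whose unique sink is still $\tau$. First I would fix what ``combined state space'' should mean. The cleanest choice is to keep the same $n$ positions and let the combined computation be described by the graph $\bar{G} = (V, \bar{E})$ with $\bar{E} = \bigcup_{h=1}^{H} E^{(h)}$: the positions are shared across heads because they come from the same input tokens, and ``concatenation $+$ linear projection (or any acyclic merging)'' only combines the head outputs downstream of these edges, adding no backward edge. Correspondingly, the combined random-walk operator can be taken to be the random-walk matrix of $\bar{G}$, or equivalently any strictly positive blend $\bar{W} = \sum_{h=1}^{H} \alpha_h W^{(h)}$ with $\alpha_h > 0$ and $\sum_h \alpha_h = 1$; what matters below is only its zero/non-zero pattern together with the fact that it is column-stochastic.

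The main steps are then short. (1) \emph{Feedforwardness is preserved}: each $W^{(h)}$ is lower-triangular in the common ordering $1,\dots,n$ since each $G^{(h)}$ is feedforward with that ordering, and both the random-walk matrix of a union of edge sets respecting a fixed vertex ordering and a convex combination of lower-triangular matrices are again lower-triangular; so $\bar{G}$ is feedforward. (2) \emph{$\tau$ is a sink of $\bar{G}$}: $\tau$ has no outgoing edge to a distinct vertex in any $G^{(h)}$, hence none in $\bigcup_h E^{(h)}$, so column $\tau$ of $\bar{W}$ equals $1_{\tau}$ and therefore $\bar{W}\, 1_{\tau} = 1_{\tau}$. (3) \emph{$\tau$ is the unique sink of $\bar{G}$}: for any $j < n$, the hypothesis that $G^{(1)}$ has unique sink $\tau$ forces an edge $(j \to i)$ with $i > j$ in $G^{(1)} \subseteq \bar{G}$, so $j$ is not a sink; iterating this inside $\bar{G}$ also shows every non-sink vertex retains (in fact gains) a directed path to $\tau$. (4) \emph{Invoke \Cref{lem:single-head}} on the feedforward graph $\bar{G}$ with unique sink $\tau$: its only stationary distribution is $1_{\tau}$. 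Since, under this model, a stationary distribution of the merged system is exactly a fixed point of $\bar{W}$, we are done. If one prefers to take the ``combined state space'' to be $H$ disjoint copies of the non-sink nodes glued at the single shared sink $\tau$ (the literal ``concatenation'' picture), the identical argument applies to that larger feedforward graph, which still has $\tau$ as its unique sink.

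I expect the only real obstacle to be definitional: choosing a combined random walk that faithfully models ``any acyclic merging'' without accidentally introducing a backward edge or breaking column-stochasticity. Two technical points deserve care, both inherited from the single-head setting: the self-loop convention at $\tau$ (so that column $\tau$ of $\bar{W}$ is a genuine probability vector and $\bar{W}\,1_{\tau} = 1_{\tau}$ holds), and the tacit assumption, already used in the proof of \Cref{lem:single-head}, that in each head every non-sink vertex emits at least one forward edge, so that the ``leak-forward'' argument inside \Cref{lem:single-head} cannot stall at an intermediate vertex. With those in hand, no case analysis on $H$ or on the particular merge map is required: the merged multi-head graph is just another feedforward graph with a unique sink, so its stationary distribution is pinned down by the single-head lemma.
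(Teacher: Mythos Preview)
Your proposal is correct and follows essentially the same idea as the paper: the combined multi-head system remains feedforward with $\tau$ as its unique sink, so the single-head conclusion carries over. Your version is more explicit in packaging this as a formal reduction to \Cref{lem:single-head} via the union graph $\bar{G}$ (or the convex blend $\bar{W}$), whereas the paper argues the same point informally by noting that mass in each head flows toward $\tau$ and that an acyclic merge creates no backward edges.
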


\begin{proof}
    See Appendix~\ref{app:Proofs}.
\end{proof}

Thus, even with multiple heads, as long as each head is a feedforward DAG sharing the 
same sink $\tau$ (see Figure~\ref{fig:multiheadsink}), the unique stationary distribution argument holds.

\begin{wrapfigure}[15]{r}{0.45\linewidth}
\centering    \includegraphics[width=\linewidth]{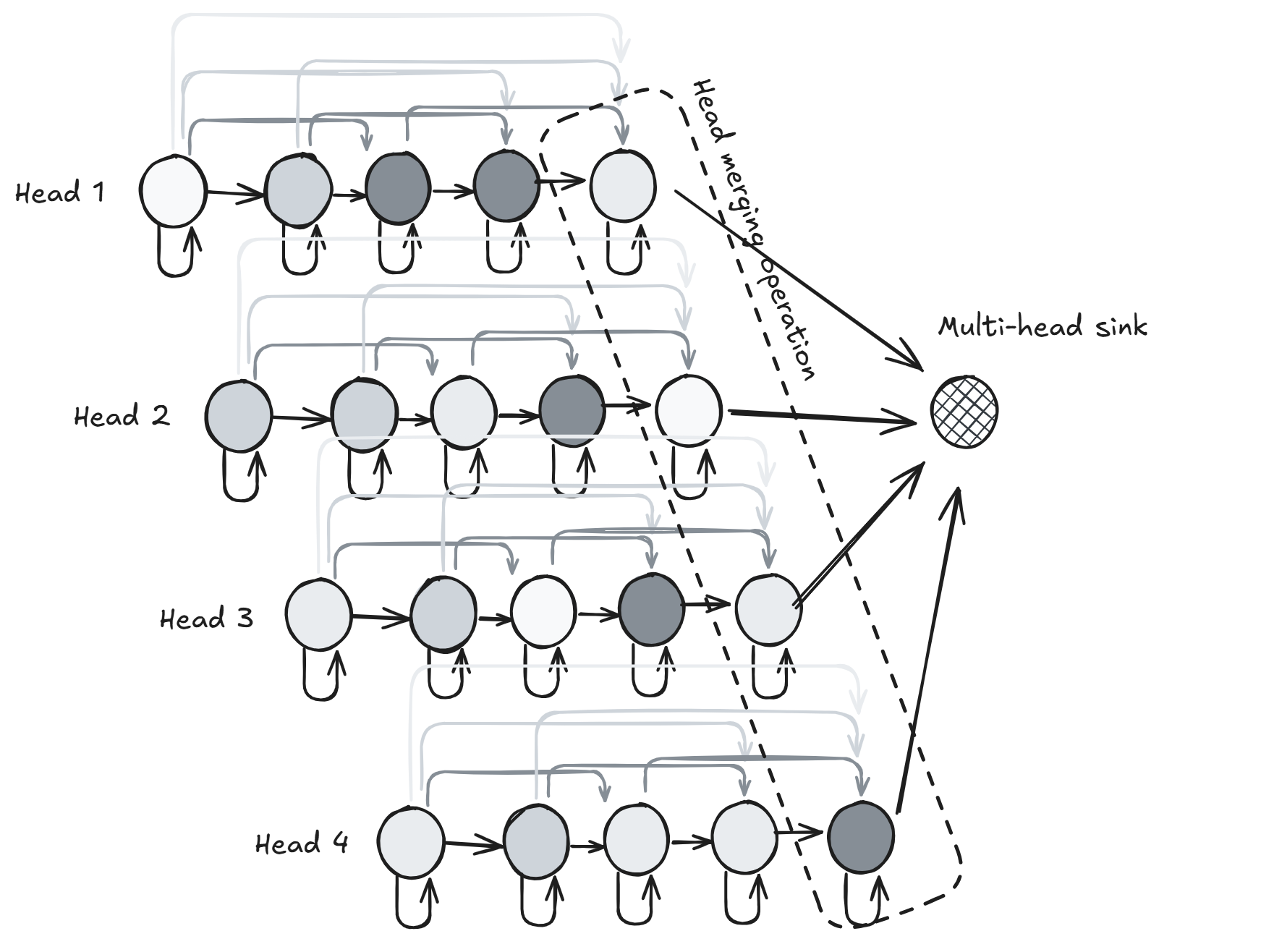}
    \caption{Multi-head sink visualization.}
    \label{fig:multiheadsink}
\end{wrapfigure}

\subsection{Having Multiple Heads can Improve Mixing Time}

\begin{definition}[Mixing Time]
\label{def:mixing_time}
Given a random walk matrix $W_{ij}$ with stationary distribution $\pi$, the \emph{mixing time} is defined as:
\begin{equation}
    T_{\mathrm{mix}}(\epsilon) = \min\{t \geq 0 : \max_{j} \| W^t_{ij} - \pi \|_{\mathrm{TV}} \leq \epsilon \},
\end{equation}
where $\|\cdot\|_{\mathrm{TV}}$ denotes the total variation distance, and $\epsilon > 0$ is a convergence threshold.
\end{definition}

The goal is to identify the earliest possible time where the distribution is sufficiently close (within $\epsilon$) to the stationary distribution $\pi$. Hence, taking the minimum over $t$ ensures we are finding the fastest convergence time. Additionally, the maximum over all initial states $j$ ensures a worst-case scenario analysis. In other words, mixing time guarantees convergence from every possible starting state: it reflects the slowest possible convergence to the stationary distribution across all starting points.

To address the main question of this section, we derive an upper bound on the mixing time for a combined multi-head attention system using combinatorial and probabilistic arguments rather than relying on the standard spectral gap analysis such as those in~\cite{Horn1985MatrixA,Levin2017MarkovCA}. This is because as pointed out by~\cite{what}, the latter does not necessarily apply given that our random walk matrices are lower-diagonal.

Let \(G^{(1)}, \dots, G^{(H)}\) be \(H\) feedforward graphs on the same set of \(n\) vertices, each with a unique sink \(\tau\). Suppose that, for each head \(h\), the probability of making a forward move (i.e., a move that brings the state closer to \(\tau\)) is at least \(p_h\). Assume that the heads are combined via a convex combination $\overline{W} = \sum_{h=1}^{H} \alpha_h\,W^{(h)},$ with \(\alpha_h\ge 0\) and \(\sum_{h=1}^H \alpha_h = 1.\) Define the effective forward probability as $p = \sum_{h=1}^{H} \alpha_h\,p_h.$

\begin{theorem}[Multi-Head Mixing Time Bound via Forward Moves]
\label{thm:multi-head-mixing} If it requires at most \(N = n-1\) forward moves to reach the sink \(\tau\) (with the worst-case being the leftmost node), then with high probability the mixing time of the combined chain satisfies
$T_{\mathrm{mix}}(\overline{W},\epsilon) \lesssim \frac{2N}{p}$, where \(\epsilon\) is a small constant. In particular, note that since \(p \le \max_{1\le h\le H} p_h\) it follows that $\frac{2N}{p} \ge \frac{2N}{\max_{1\le h\le H} p_h}$. Thus, if the convex weights are chosen arbitrarily, the bound on the mixing time of the combined chain is in general no better than the mixing time of the fastest individual head, but always better than the worse. However, if the weights can be selected adaptively:
\begin{equation}
\begin{split}
T_{\mathrm{mix}}(\overline{W},\epsilon) \lesssim \frac{2N}{\max_{1\le h\le H} p_h} = \min_{1\le h\le H} T_{\mathrm{mix}}\bigl(W^{(h)}, \epsilon\bigr).
\end{split}
\end{equation}
\end{theorem}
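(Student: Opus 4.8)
The plan is to reduce the convergence of the combined chain $\overline{W}$ to a one‑dimensional ``progress'' counter that increments whenever the walk makes a forward move, and then apply a Chernoff‑type concentration bound to this counter. First I would observe that since $\overline{W} = \sum_{h} \alpha_h W^{(h)}$ is a convex combination, the probability that a single step of the combined walk is a forward move (one that decreases the graph‑distance to $\tau$) is at least $\sum_{h} \alpha_h p_h = p$, uniformly over the current state; this is immediate from linearity applied to the mixture. Because every $G^{(h)}$ is feedforward with the same unique sink $\tau = n$, at most $N = n-1$ forward moves suffice to reach $\tau$ from the worst‑case (leftmost) start, and by Lemma~\ref{lem:multi-head} the point mass $1_\tau$ is the stationary distribution, with $\tau$ absorbing.

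Next I would couple the combined walk with an i.i.d.\ sequence of $\mathrm{Bernoulli}(p)$ trials so that the number of forward moves made in $t$ steps stochastically dominates a $\mathrm{Binomial}(t,p)$ variable $S_t$. Setting $t = 2N/p$ gives $\mathbb{E}[S_t] = 2N$, and a standard multiplicative Chernoff bound yields $\Pr[S_t < N] \le \Pr[S_t < \tfrac12\,\mathbb{E} S_t] \le e^{-N/4}$. Since accumulating $N$ forward moves guarantees absorption at $\tau$ and $\pi = 1_\tau$, the total variation distance from stationarity after $t$ steps, starting from any $j$, is at most $\Pr[\text{not yet absorbed}] \le e^{-N/4}$. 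Hence for any $\epsilon \ge e^{-N/4}$ (i.e.\ for $\epsilon$ a small constant once $n$ is moderately large) we obtain $T_{\mathrm{mix}}(\overline{W},\epsilon) \le 2N/p$, matching the claimed bound up to the hidden constant.

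For the comparative statements I would note only that $p = \sum_{h} \alpha_h p_h \le \max_{1\le h\le H} p_h$, since a convex combination never exceeds its largest entry, so $2N/p \ge 2N/\max_h p_h$; applying the same argument head‑by‑head gives $T_{\mathrm{mix}}(W^{(h)},\epsilon) \lesssim 2N/p_h$, so arbitrary weights give a bound no better than the fastest head but never worse than the slowest. Choosing the weights adaptively as $\alpha_{h^\star}=1$ for $h^\star \in \arg\max_h p_h$ and $0$ otherwise makes $p = \max_h p_h$, which matches $\min_h 2N/p_h = \min_h T_{\mathrm{mix}}(W^{(h)},\epsilon)$.

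The main obstacle I anticipate is making the coupling in the second step fully rigorous: the per‑step forward probability is only bounded below by $p$ and the forward‑move events are not independent (they depend on the current vertex), so one must argue carefully --- e.g.\ via a step‑by‑step stochastic‑domination argument along the natural filtration, or an optional‑stopping argument on the progress counter --- that the true absorption time is dominated by the negative‑binomial time of the idealized i.i.d.\ process. A secondary subtlety is that a single forward move may advance the walk by more than one position, so ``$N$ forward moves suffice'' is only an upper bound on the required progress; this only helps (the true absorption time is even smaller), but it should be stated explicitly so the reduction to the counter is valid.
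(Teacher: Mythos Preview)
Your proposal is correct and follows essentially the same approach as the paper: reduce to a forward-move counter, set $t=2N/p$ so the expected count is $2N$, and apply a concentration inequality (you use multiplicative Chernoff where the paper uses Hoeffding, obtaining $e^{-N/4}$ versus $e^{-pN}$) to show $N$ forward moves occur with high probability. If anything you are more careful than the paper, which simply asserts $X\sim\mathrm{Bin}(t,p)$ without the stochastic-domination coupling you correctly flag as needed; the paper also appends a short order-statistics remark (that $\Pr[\max_h p_h \ge \bar p] = 1 - F(\bar p)^H \to 1$) which is ancillary to the theorem and which you may add or omit.
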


\begin{proof}
    See Appendix~\ref{app:Proofs}, including scope (intra-layer analysis) and modeling assumptions (convex proxy).
\end{proof}

In summary, each head \(W^{(h)}\) corresponds to a Markov chain that propagates information along selected edges in the DAG. By merging these heads, additional transition edges are added, reducing potential bottlenecks. While combining heads via a convex combination may not always yield a faster mixing time than the best individual head (it is only guaranteed to be faster than the worse head), adaptive weighting (favoring the head with the highest \(p_h\)) can ensure that the bound on the mixing time of the combined chain is close to the fastest head. We empirically verify this effect in Section~\ref{sec:Experimental Validation}.

\vspace{-10pt}
\section{Multi-Head Minimax Fidelity}
\label{sec:Multi-Head Minimax Fidelity}

In a similar spirit to our analysis of multi-head mixing time, we now turn to studying the fidelity of information propagation when multiple feedforward attention heads are combined. As pointed out in~\cite{what}, whereas mixing time measures how quickly probability mass converges to the unique sink, fidelity quantifies how sharply the signal from each node is preserved when it reaches the sink.

\begin{definition}[Diffusion Matrix] For a single-head feedforward graph, we define a \emph{diffusion} matrix
\begin{equation}
  \Delta_{ij} \;=\; 
  \begin{cases}
    \frac{1}{\delta_i^{\leftarrow}}, & \text{if } (j\to i) \in E,\\[1mm]
    0, & \text{otherwise},
  \end{cases}
\end{equation}
which normalizes by the column (i.e., the in-degree $\delta_i^{\leftarrow}$, which is always positive since we always have self-loops).
\end{definition}

Hence, in this notation index $i$ corresponds to the receiving node and the second index $j$ corresponds to the sending node. In our analysis, the receiving node we are interested in is the sink, $i=\tau$. 

\begin{definition}[Signal at the Sink]
If we initialize with a one-hot vector \( e_j \) (placing all mass at node \( j \)), then after \( t \) steps the signal at the sink \(\tau\) is given by $\phi^{(h)}_j(t) \;=\; \bigl((\Delta^{(h)})^t\bigr)_{\tau j}.$
\end{definition}


\begin{definition}[Node Fidelity] The \emph{node fidelity} is defined as the maximum signal that node \( j \) ever contributes to \(\tau\): $\phi^{(h)}_j \;=\; \max_{t} \phi^{(h)}_j(t) =\; \max_{t} \bigl((\Delta^{(h)})^t\bigr)_{\tau j}.$
\end{definition}

\begin{definition}[Optimal Fidelity Time] We define the \textit{optimal fidelity time} as the time at which node $j$ (head $h$) attains its maximum signal: $t_{j}^{(h)} = \arg\max_{t}\left\{ \phi_j^{(h)}(t) \right\}.$
\end{definition}

Note that the amount of signal that reaches the sink $\tau$ does not necessarily increase monotonically, see Proposition 5.1 in~\cite{what}. Instead, it can rise to a peak at some intermediate time and then decay as the signal becomes diluted or averaged out by further propagation. Therefore, even if the signal eventually decays (due to the averaging nature of the diffusion process), its peak value, captured by the equation above, represents the most effective transmission from node $j$. Lastly, by taking the minimum over all nodes we identify the worst-case (least preserved) signal among all nodes: 

\begin{definition}[Minimax Fidelity]
The \emph{minimax fidelity} for a single head \( h \) at the sink $\tau$ is given by $\phi^{(h)}_{\min} \;=\; \min_{j} \phi^{(h)}_j.$
\end{definition}

Intuitively, this would be the lowest peak signal produced at the sink by any node in the DAG (token in the sequence).

\subsection{Having Multiple Heads can Amplify Fidelity}

Next, we will show that it is possible to observe synergistic effects in terms of signal propagation over parallel heads.

\begin{definition}[Multi-Head Diffusion Operator] Let $\Delta^{(1)},\dots,\Delta^{(H)}$ be the diffusion matrices corresponding to a total of $H=|\mathcal{H}|$ feedforward attention heads. We define \textit{multi-head diffusion operator} using convex weights $\{\beta_h\}_{h\in \mathcal{H}}$ (i.e., $\beta_h\ge0$ and $\sum_{h\in \mathcal{H}}\beta_h=1$) as
\begin{equation}
\overline{\Delta}_{ij} = \sum_{h\in \mathcal{H}}\beta_h\,\Delta^{(h)}_{ij}.
\end{equation}
\end{definition}

\begin{remark}[Cross‐Head Pathway Contribution] Taking powers of the multi-head diffusion operator yields the following expression (note non-commutativity: in general $\Delta^{(h)}\Delta^{(h')} \neq \Delta^{(h')}\Delta^{(h)}$) which promotes additional information propagation pathways:
\vspace{-5pt}
\begin{equation}
\begin{aligned}
(\overline\Delta)^t
&= \Bigl(\sum_{h\in\mathcal{H}}\beta_h\Delta^{(h)}\Bigr)^t = \sum_{h_1\in\mathcal H} \cdots\sum_{h_t\in\mathcal H}
   \Bigl(\prod_{r=1}^{r=t}\beta_{h_{r}}\Bigr)\,
   \Delta^{(h_1)}\cdots\Delta^{(h_t)}.
\end{aligned}
\end{equation}
\end{remark} 

\begin{example}
Let our nodes be ordered $(u,v,\tau)$. Head 1 has edges $u\to v$ (plus self‐loops everywhere), head 2 has edges $v\to\tau$ (plus self‐loops). Then the diffusion matrices are:
$$
\Delta^{(1)} = 
\begin{pmatrix}
1   & 0   & 0 \\[3pt]
\tfrac12 & \tfrac12 & 0 \\[3pt]
0   & 0   & 1 
\end{pmatrix}, 
\qquad
\Delta^{(2)} = 
\begin{pmatrix}
1   & 0     & 0 \\[3pt]
0   & 1     & 0 \\[3pt]
0   & \tfrac12 & \tfrac12
\end{pmatrix}.
$$

For a single head: $
   \bigl(\Delta^{(1)}\bigr)^2_{\tau u}
   = 0,
   \qquad
   \bigl(\Delta^{(2)}\bigr)^2_{\tau u}
   = 0,
   $ because head 1 never has a path $u\to\tau$ in two steps, nor does head 2. On the other hand when using multi-head attention we obtain cross‐head product terms:

   $$
   \bigl(\Delta^{(2)}\Delta^{(1)}\bigr)_{\tau u}
   = \tfrac12 \times \tfrac12
   = \tfrac14
   > 0.
   $$

The term corresponding to head 1 then head 2 appears with weight $\beta_1\beta_2$:

\begin{equation*}
\begin{aligned}
(\overline\Delta)^2_{\tau u} =
   \\ = & \, (\beta_1^2\,(\Delta^{(1)})^2)_{\tau u} + ((\beta_1\,\Delta^{(1)})(\beta_2\,\Delta^{(2)}))_{\tau u} +
     \\ + &  \,((\beta_2\,\Delta^{(2)})(\beta_1\,\Delta^{(1)}))_{\tau u}
     + (\beta_2^2\,(\Delta^{(2)})^2)_{\tau u} =
     \\ + & 0 + 0 + \beta_1\beta_2\frac{1}{4} + 0 = \beta_1\beta_2\frac{1}{4} > 0. \,
\end{aligned}
\end{equation*}
   
so the multi-head diffusion operator captures a contribution. In other words, whenever one head bridges $u\to v$ and another bridges $v\to\tau$, their composition opens up a two‐step path that neither head has alone.
\end{example}

This would exemplify the trivial case where the fidelity is zero for each individual head, but not for the multi-head diffusion operator.

\begin{definition}[Multi-Head Node Fidelity]
We define the  \textit{multi-head node fidelity} as the node fidelity computed based on the multi-head diffusion operator: $\phi^{\mathrm{multi}}_j = \max_{t}\Bigl((\overline{\Delta})^t\Bigr)_{\tau j}.$
\end{definition}

\begin{definition}[Multi-Head Minimax Fidelity]
The \emph{multi-head minimax fidelity} is given by $\phi^{\mathrm{multi}}_{\min} \;=\; \min_{j} \phi^{\mathrm{multi}}_j.$
\end{definition}

\begin{definition}[Best Head Minimax Fidelity] Comparing the minimax fidelity across heads in the multi-head attention mechanism, the best head minimax fidelity is: $\phi^{(h_*)}_{\min} = \max_{h\in\mathcal{H}} \phi^{(h)}_{\min}.$
\end{definition}

\begin{definition}[Best Head] Consequently we define the \textit{best head} as: $h_* = \arg\max_{h}\left\{ \phi_{\min}^{(h)} \right\}.$
\end{definition}

Next we proceed to show a non-trivial case in which the best head minimax fidelity is not zero (there exists a path connecting all nodes to the sink), yet the multi-head minimax fidelity is higher:

\begin{example}
Let our nodes be ordered $(u,v,w,\tau)$. Let head 1 be a linear chain with edges $u\to v, v \to w, w \to \tau$ (plus self‐loops everywhere), and head 2 a feedforward DAG with $u\to w, v\to w, v \to \tau, w \to \tau$ (plus self‐loops). Also set $\beta_1=\beta_2 = \frac{1}{2}.$ Then the diffusion matrices and the multi-head diffusion operator are:
$$
\Delta^{(1)} = 
\begin{pmatrix}
1   & 0   & 0 & 0\\[3pt]
\tfrac12 & \tfrac12 & 0 & 0 \\[3pt]
0   & \tfrac12   & \tfrac12 & 0 \\[3pt]
0   & 0   & \tfrac12 & \tfrac12 \\[3pt]
\end{pmatrix}, 
\qquad
\Delta^{(2)} = 
\begin{pmatrix}
1 & 0 & 0 & 0\\[3pt]
0 & 1 & 0 & 0\\[3pt]
\tfrac13 & \tfrac13 & \tfrac13 & 0\\[3pt]
0 & \tfrac13 & \tfrac13 & \tfrac13
\end{pmatrix},
\qquad
\overline{\Delta} = 
\begin{pmatrix}
1 & 0 & 0 & 0\\[3pt]
\tfrac14 & \tfrac34 & 0 & 0\\[3pt]
\tfrac16 & \tfrac{5}{12} & \tfrac{5}{12} & 0\\[3pt]
0 & \tfrac{1}{6} & \tfrac{5}{12} & \tfrac{5}{12}
\end{pmatrix}.
$$

Note that in this case all nodes $u,v,w$ have paths to reach the sink $\tau$ in both heads independently, unlike in the previous example. Applying matrix multiplication repeatably to each diffusion matrix we obtain the result in Figure~\ref{fig:minimax_example}, in which we can clearly see that the minimax fidelity for the multi-head system is higher than that of each head, that is $\phi^{\mathrm{multi}}_{\min}=0.417> \phi^{(h_*)}_{\min} = \phi^{(h_1)}_{\min}=0.375$ and $\phi^{\mathrm{multi}}_{\min}=0.417>\phi^{(h_2)}_{\min}=0.250$.
\label{ex:example}
\end{example}

\begin{figure}[htbp]
  \centering
  \subfigure[Head 1]{%
    \includegraphics[width=0.32\textwidth]{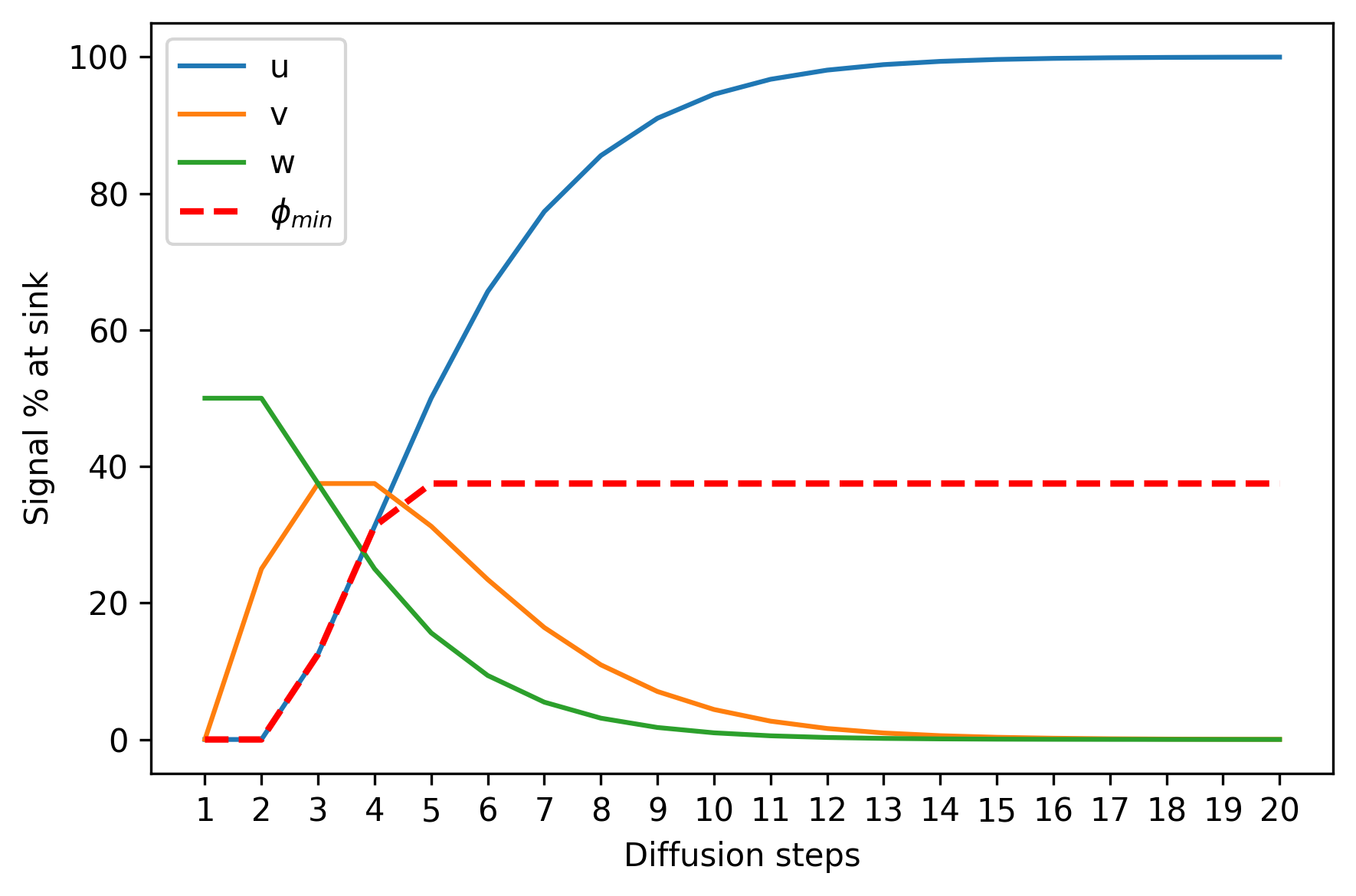}%
    \label{fig:sub1}
  }\hfill
  \subfigure[Head 2]{%
    \includegraphics[width=0.32\textwidth]{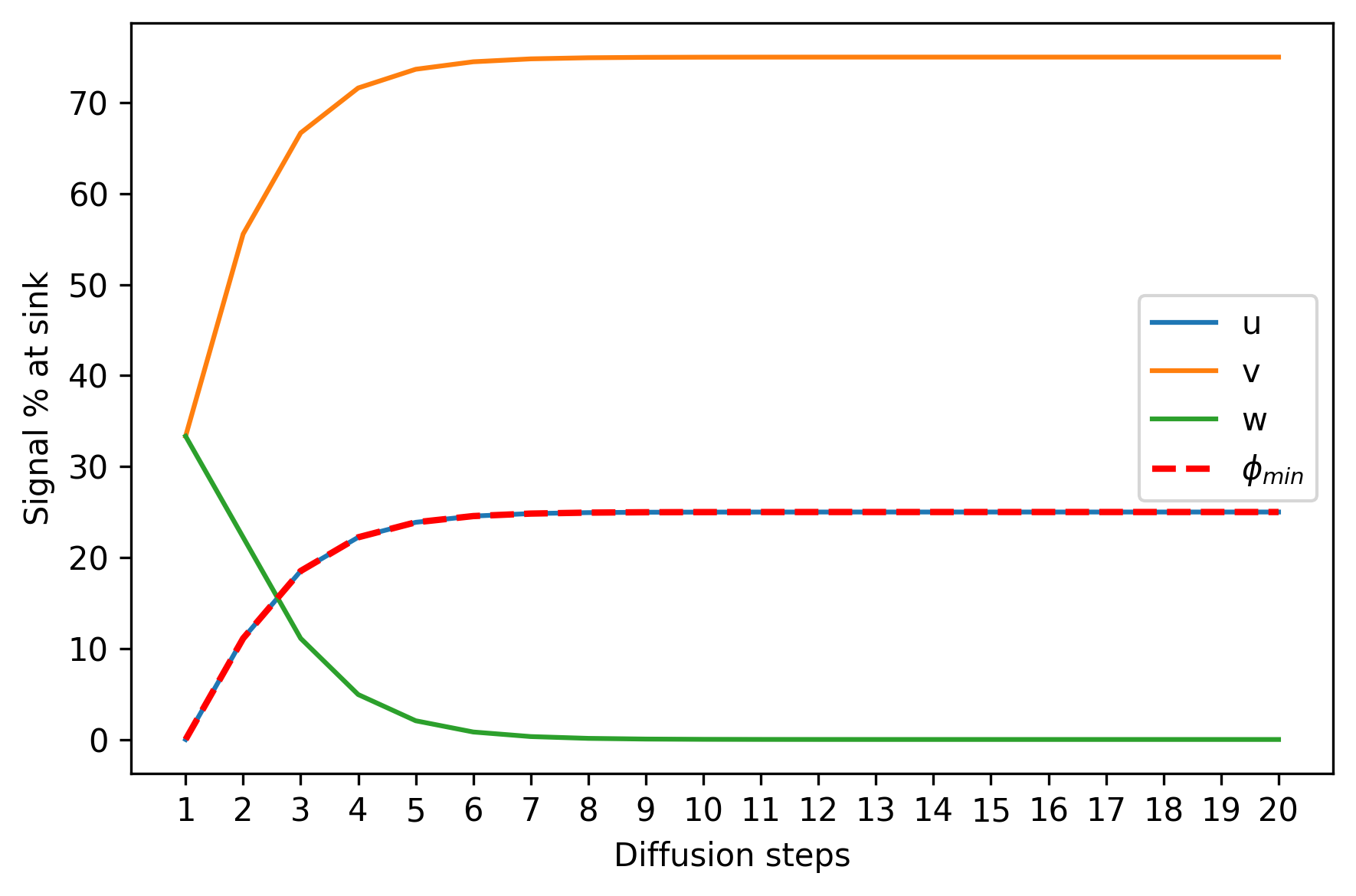}%
    \label{fig:sub2}
  }\hfill
  \subfigure[Multi-Head]{%
    \includegraphics[width=0.32\textwidth]{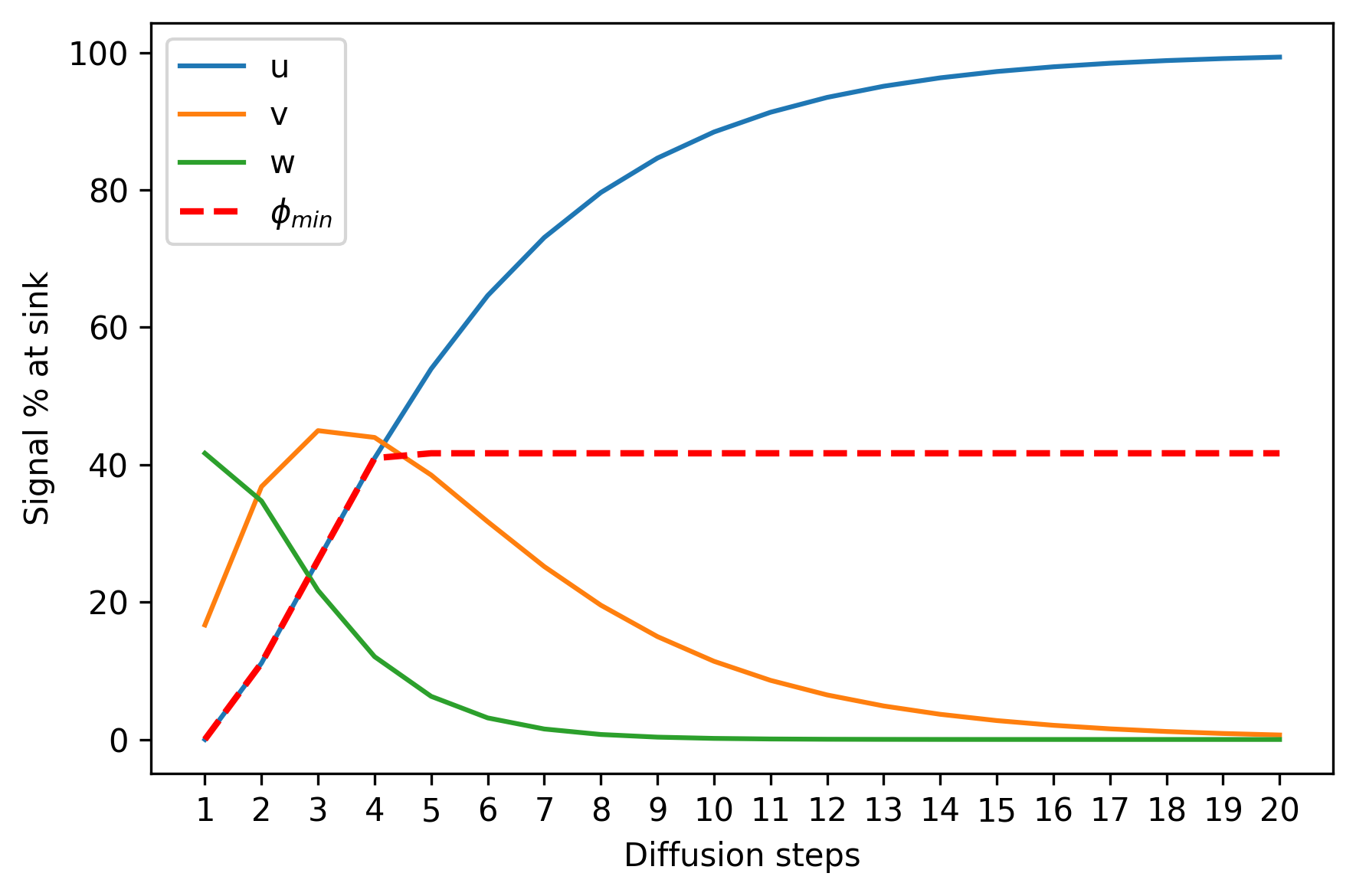}%
    \label{fig:sub3}
  }
  \caption{Diffusion of signal from nodes $u,v,w$ to the sink $\tau$ under single-head and multi-head diffusion kernels. Solid lines show signal arrival percentages over diffusion steps, while the dashed line $\phi_{\min}$ indicates the cumulative fidelity.}
  \label{fig:minimax_example}
\end{figure}

\vspace{-10pt}
This example demonstrates that combining multiple feedforward attention heads results in a multi-head system that can achieve a level of minimax fidelity exceeding that of any single head, when this is non-trivial (i.e. not zero). In a single computational graph (or single-head attention mechanism), the diffusion process is governed by one diffusion matrix, limiting the maximum achievable fidelity to that single pathway. In contrast, the multi-head setting employs diverse diffusion matrices, each capturing different aspects of signal propagation. \textit{Although one might expect the weighted average of individual fidelities to be bounded by the highest individual fidelity, the multi-head diffusion operator can surpass the best individual performance.} This synergy suggests that multi-head architectures are not merely parallel computations but can work in concert to enhance the preservation and amplification of information. For an additional discussion regarding Figure~\ref{fig:minimax_example} see Appendix~\ref{app:example}.

\section{Experimental Validation}
\label{sec:Experimental Validation}

In this section, we empirically validate our theoretical analysis by measuring proxies for mixing time and fidelity. Our experiments are conducted on two toy synthetic tasks concerned with string manipulation. We first describe the data generation and model configurations, then 
detail the algorithms used to compute each proxy, and finally present our experimental results. Tables and figures for this section can be found in Appendix~\ref{app:tablesandfigures}.
\paragraph{Data Generation} We work with two synthetic datasets. I) We generate sequences of length 100 from a vocabulary of 256 tokens. Each token is sampled independently from a discrete uniform distribution. For each sample 
$x = (x_1, x_2, \ldots, x_{100})$, the target is defined as $x$ itself: $x \mapsto x.$ A total of 5000 such examples are used for training. II) Similarly, we generate random sequences of length $n=100$. However, 
the target is produced by cyclically shifting the input by one position: $x = (x_1, x_2, \ldots, x_{99}, x_{100}) \quad \mapsto \quad (x_{100}, x_1, x_2, \ldots, x_{99}).$ This task requires the model to learn a non-trivial permutation of the sequence.

\paragraph{Model Architecture and Training} We adopt a standard pre-norm (RMSNorm) Transformer architecture with causal (decoder-only) attention. The model has 4 transformer layers with a token embedding dimension of 64 (both for the embedding and attention layers), a MultiLayer Perceptron (MLP) hidden dimension of 128, and dropout of 0.1 in both attention and MLP layers. We vary the number of attention heads over $\{1, 4, 8, 16\}$ while keeping the total embedding dimension fixed.
Thus, as the head count increases, the per-head dimension decreases accordingly. We do so to consider a constant model parameter count, and to ensure our results are solely dependent on the computational graph structure, rather than on having more trainable weights. Training is performed using the Adam optimizer with a learning rate of $10^{-3}$, a batch size of 50, and for 200 epochs.

\vspace{-10pt}
\subsection{Mixing Time Proxy Estimation}
\label{subsec:Mixing Time Proxy Estimation}

To quantify how rapidly information propagates through the model’s causal attention graph, we approximate the mixing time by a straightforward (per-layer) Monte Carlo hitting‐time experiment, see Algorithm~\ref{alg:mixing-time-per-layer} (Appendix~\ref{app:Algorithmic Descriptions}). We extract the attention tensor of shape $(H,n,n)$, where $H$ is the number of heads and $n=100$ the sequence length. We then form a single transition matrix by weighting each head according to the model’s own learned output‐projection importances. Thus, we extract each head’s contribution to the next‐layer representation by slicing the model’s output‐projection matrix into $H$ equal blocks (one per head) and computing the norm of each block.  These norms serve as raw importance scores, which we then normalize to sum to one and use to weight the corresponding $n\times n$ attention matrices.  The resulting convex combination defines a single forward‐transition matrix over token positions. Interpreting this combined matrix as defining forward-directed transition probabilities, we simulate many independent random walks starting from each token position, sampling the next position according to the attention-induced distribution over successors. Each walk proceeds until it reaches the final token or until a maximum cutoff of 100 steps is exceeded. 

Note that the attention tensor depends on the input, so we run this experiment across all samples in the dataset. We repeat this process for 500 simulations per start state and per sample to obtain an empirical mean hitting time. We would like to highlight that to be faithful to Definition~\ref{def:mixing_time}, one would need to apply the max operator across starting positions, rather than the mean as in Algorithm~\ref{alg:mixing-time-per-layer}. However, we restrict ourselves to the average hitting time across positions for computational tractability: we find that to obtain meaningful trends for the worse case mixing time it is necessary to increase the cutoff steps substantially making the simulation very expensive. In Table~\ref{tab:mixing-results-copy} and Table~\ref{tab:mixing-results-cycle} we report the results for mixing time in steps for both tasks and for all layers in the model. We also plot them in Figure~\ref{fig:mixing_copy} and Figure~\ref{fig:mixing_cycle} in which we can more clearly appreciate the trend predicted theoretically: adding more heads (under the same parameter count) leads to faster mixing.

\vspace{-10pt}
\subsection{Minimax Fidelity Proxy Estimation}
\label{subsec:Minimax Fidelity Proxy Estimation}

To quantify how sharply information from every token is preserved when it reaches the sink, we compute a (per-layer) minimax diffusion fidelity over a fixed diffusion horizon, see Algorithm~\ref{alg:minimaxfidelity-per-layer} (Appendix~\ref{app:Algorithmic Descriptions}). As before, we begin by extracting the attention tensor of shape $(H,n,n)$ and forming a single forward‐transition matrix via a convex combination of heads weighted by their learned output‐projection norms.  We then simulate the diffusion process up to 100 steps by repeatedly applying matrix multiplication: at each step we record the probability of mass starting at node $j$ arriving at the sink.  For each start node $j$ we retain its peak arrival probability over all steps, and finally we take the minimum of these peaks across $j$ to obtain the worst‐case (minimax) fidelity for that example. In Table~\ref{tab:fidelity-results-copy}, Table~\ref{tab:fidelity-results-cycle}, and Figure~\ref{fig:fidelity_copy} and Figure~\ref{fig:fidelity_cycle} we can see that the fidelity goes up with the number of heads, as predicted. Interestingly, Layer 4 of the model trained on cycle sequences appears to perform poorly in terms of both fidelity and mixing time. Similarly, when comparing subfigures in Figure~\ref{fig:performance_eval}, drops in mixing time seem to align with increases in fidelity. This could suggest a correlation between the two metrics, but further investigation is needed.

Note that thus far, we have empirically shown improved minimax fidelity for multi-head attention compared to single-head models. However, our primary discussion in Section~\ref{sec:Multi-Head Minimax Fidelity} focused on the fact that the fidelity from combining multiple heads can surpass the best individual head's fidelity at a given node within the same multi-head model. To this end, we evaluated each head's minimax fidelity independently and compared it directly with the minimax fidelity obtained by combining heads using learned weights. Empirically, we indeed observed multiple instances across both tasks (copy and cycle) and various head counts (4, 8, and 16 heads) where the combined multi-head minimax fidelity exceeded the best individual head minimax fidelity, thus providing experimental support that this effect takes place in learning-based system too, not only when handcrafted as in our original example. See Table~\ref{tab:fidelity-comparison-copy} and Table~\ref{tab:fidelity-comparison-cycle} for details.

\section{Conclusion}

In this work we introduced a graph-theoretic framework that models multi-head attention as a collection of synergistic feedforward DAGs, revealing possible benefits beyond mere computational parallelism. We show that the presence of multiple heads can reduce mixing time and amplify minimax fidelity. To verify the validity of the intuition, we perform preliminary sequence manipulation experiments with single-head versus multi-head Transformers with the same parameter count, obtaining satisfactory results. We have often argued that it may be possible to find configurations that satisfy our theoretical requirements via gradient descent optimization. This is true, but in practice the model is trying to optimize the downstream loss function: cross-entropy in the case of pre-training for next-token prediction. This implies that mixing time and minimax fidelity can only be optimized as a consequence of improving downstream performance but not directly. The model must find them useful for the task at hand. 

Finally, we note that previous work such as that by~\citep{voita2019analyzingmultiheadselfattentionspecialized,NEURIPS2019_2c601ad9} has found that many attention heads can be pruned without significant loss in performance. This does not directly contradict our findings: rather, it suggests that the benefits we identify (reduced mixing time and enhanced minimax fidelity) may plateau or saturate beyond a certain number of heads. That is, while a few heads could contribute importantly to synergy, adding more may yield diminishing returns unless head diversity is preserved. Another interesting observation is that pruning (including head pruning) is usually done after training~\citep{cheng2024surveydeepneuralnetwork}, which could suggest that additional heads help during optimization, even if they become redundant at convergence.

\bibliography{pmlr-sample}

\clearpage
\appendix

\section{Proofs}
\label{app:Proofs}

This appendix contains the proofs that complement the discussion in the main text.

\begin{proof}[Proof of Lemma \ref{lem:multi-head}]
For any head $h$, the graph $G^{(h)}$ is feedforward. This means that any random walk starting at $j$ within head $h$ can only transition to vertices $j \le i$. Furthermore, because $\tau=n$ is the unique sink for all heads, probability mass within any single head $h$ eventually flows towards $\tau$. Specifically, applying the walk matrix $W^{(h)}$ repeatedly will concentrate the mass at $\tau$. The merging operation combines the outputs from each head at each position. Crucially, this merging process is typically local and acyclic with respect to the sequence positions. For example, concatenating head outputs at position $i$ and projecting them linearly combines information already arrived at position $i$ via paths $(j \to i)$ within the various heads. This merging step does not create new paths from $\tau$ back to earlier positions $j < \tau$, nor does it allow information at a position $j < \tau$ to remain indefinitely without flowing towards $\tau$.
\end{proof}

\begin{proof}[Proof of Theorem \ref{thm:multi-head-mixing}]
For each step of the combined Markov chain defined by \(\overline{W}\), the probability of making a forward move is at least \(p\). Let \(X\) denote the total number of forward moves made after \(t\) independent steps. Then
\begin{equation}
  X \sim \operatorname{Bin}(t,p)
\end{equation}
and the expected number of forward moves is
\begin{equation}
  \mathbb{E}[X] = t\,p.
\end{equation}
Setting the expected number of forward moves to be \(2N\) (i.e., \(t\,p = 2N\)) gives
\begin{equation}
  t = \frac{2N}{p}.
\end{equation}
Now, by Hoeffding's inequality~\citep{hoeffding1963probability},
\begin{equation}
  \Pr\bigl[X < N\bigr] \le \exp\!\Bigl(-\frac{2(tp-N)^2}{t}\Bigr).
\end{equation}
Substituting \(tp = 2N\) yields
\begin{equation}
  \Pr\bigl[X < N\bigr] \le \exp\!\Bigl(-\frac{2N^2}{2N/p}\Bigr)
  = \exp(-pN).
\end{equation}
For fixed \(p\) and sufficiently large \(N\), the right-hand side becomes very small (e.g., less than \(1/4\)), meaning that with high probability the chain makes at least \(N\) forward moves within \(t = \frac{2N}{p}\) steps. In other words, the mixing time is bounded by
\begin{equation}
  T_{\mathrm{mix}}(\overline{W},\epsilon) \lesssim \frac{2N}{p}.
\end{equation}

The above refers to the mixing time defined with respect to total variation distance, i.e., the time required for the Markov chain to be within \(\epsilon\) of the stationary distribution. For the combined chain this quantity is no more than a constant multiple of $\frac{2N}{p}$ (or behaves asymptotically like $\frac{2N}{p}$) as the relevant parameters vary.

Next, for \(H\) heads with individual forward probabilities \(p_1,\dots,p_H\) and convex weights \(\alpha_1,\dots,\alpha_H\) (with \(\alpha_h\ge 0\) and \(\sum_{h=1}^{H} \alpha_h = 1\)), the effective forward probability is  
\begin{equation}
p = \sum_{h=1}^{H} \alpha_h\,p_h.
\end{equation}
Since the weights are nonnegative and sum to 1, by the properties of convex combinations we have  
\begin{equation}
\min_{1\le h\le H} p_h \le p \le \max_{1\le h\le H} p_h.
\end{equation}
Hence, if one could choose the weights optimally (let us assume that gradient descent will likely try to optimize for an optimal combination)—for example, by setting \(\alpha_{h^*}=1\) for the head \(h^*\) that attains the maximum \(p_h\) and \(\alpha_h=0\) for all \(h\ne h^*\)—then we would have  
\begin{equation}
p^* = \max_{1\le h\le H} p_h.
\end{equation}
In that ideal case, the overall mixing time (which is roughly bounded by \(\frac{2N}{p}\)) would be  
\begin{equation}
T_{\mathrm{mix}}(\overline{W},\epsilon) \lesssim \frac{2N}{\max_{1\le h\le H} p_h},
\end{equation}
which is as fast as the fastest individual head. The result simply emphasizes that one cannot hope for a combined chain to mix faster than the fastest individual chain, though with optimal (or adaptive) weighting one can match it.
 
Even if one uses fixed (say, uniform) weights instead of the idealized choice, the presence of multiple heads means that with high probability at least one head has a relatively high \(p_h\). To show this formally assume that the forward probabilities \(p_1, p_2, \dots, p_H\) for the \(H\) heads are independent random variables with a common cumulative distribution function~(CDF), \(F(p)\). For any fixed threshold \(\bar{p}\) such that \(F(\bar{p}) < 1\), the probability that a single head satisfies \(p_h \ge \bar{p}\) is
\begin{equation}
\Pr(p_h \ge \bar{p}) = 1 - F(\bar{p}).
\end{equation}
Since the heads are independent, the probability that all \(H\) heads have \(p_h < \bar{p}\) is
\begin{equation}
\Pr\Bigl(p^* < \bar{p}\Bigr) = \Bigl(F(\bar{p})\Bigr)^H.
\end{equation}
Thus, the probability that at least one head achieves \(p_h \ge \bar{p}\) is
\begin{equation}
\Pr\Bigl(p^* \ge \bar{p}\Bigr) = 1 - \Bigl(F(\bar{p})\Bigr)^H.
\end{equation}
Since \(F(\bar{p}) < 1\), as \(H\) increases,
\begin{equation}
\lim_{H\to\infty} \Pr\Bigl(p^* \ge \bar{p}\Bigr) = 1.
\end{equation} 

Consequently, if one could choose weights adaptively to favor the head with the highest $p_h$, for instance using gradient descent, \(p\) will be statistically biased toward higher values. While the worst-case bound is  
\begin{equation}
\frac{2N}{\max_{h} p_h} \le T_{\mathrm{mix}}(\overline{W},\epsilon) \lesssim \frac{2N}{\min_{h} p_h},
\end{equation}
the key point is that, statistically, multiple heads increase the chance that a high \(p_h\) is present, thereby allowing the effective mixing time to approach the lower bound of \(\frac{2N}{\max_{h} p_h}\).
\end{proof}

\begin{remark}[Scope and Modeling Assumptions in Theorem~\ref{thm:multi-head-mixing}]
In real multi-head attention, heads are concatenated and passed through a linear projection \(W_{\mathrm{proj}}\), which produces a new row-stochastic, input-dependent kernel. This is \emph{not} literally a convex combination of the \(W^{(h)}\). Our model uses such a combination as a \emph{tractable analytical proxy} to upper-bound the effective forward propagation strength, assuming \(W_{\mathrm{proj}}\) preserves the feedforward DAG structure. Additionally, we would like to clarify that our mixing time analysis models information propagation \emph{within a single application} of multi-head attention (i.e., one forward pass through an attention layer). For a fixed input, each head defines a fixed transition matrix \(W^{(h)}\), and the merged kernel \(\overline{W} = \sum_{h} \alpha_h W^{(h)}\) is also fixed. Thus, the Markov chain defined by iterating \(\overline{W}^t\) is \emph{time-homogeneous}, and our bound applies directly. Note that \emph{time-inhomogeneous} chains arise naturally in \emph{multi-layer} Transformers, where each layer applies a distinct attention kernel. Our analysis is restricted to \emph{intra-layer} dynamics and does \emph{not} extend across layers.
\end{remark}

\section{Example~\ref{ex:example} temporal dynamics discussion}
\label{app:example}

Figure~\ref{fig:minimax_example} illustrates the temporal dynamics of Example~\ref{ex:example}. Each head defines a distinct diffusion pattern: the first head exhibits a gradual but comprehensive propagation from $u$ to $\tau$, while the second head emphasizes faster, more localized diffusion primarily through $v$. When combined into a multi-head system, the resulting diffusion operator integrates these complementary pathways, producing a joint evolution that exceeds the minimax fidelity achievable by either head alone. In the combined diffusion, the node signals display cooperative behavior: $u$ maintains long-range reachability while $v$ contributes strong early propagation.

\section{Compute Resources}

All experiments were performed on a single H100 GPU under a Linux environment.

\section{Results: Tables and Figures}
\label{app:tablesandfigures}

In this appendix we include the tables and figures discussed in Section~\ref{sec:Experimental Validation}.

\begin{table}[hbpt!]
\centering
\caption{Mixing time steps mean $\pm$ std dev. for \textbf{Copy Sequence}.}
\label{tab:mixing-results-copy}
\scalebox{0.5}{
\begin{tabular}{lcccc}
\toprule
\textbf{Heads} & L1 & L2 & L3 & L4 \\
\midrule
1  & $3.8344 \pm 0.1391$ & $77.9281 \pm 2.8387$ & $78.6635 \pm 3.2322$ & $73.5020 \pm 3.6750$ \\
4  & $3.7370 \pm 0.0259$ & $3.7155 \pm 0.0365$ & $3.8895 \pm 0.0704$ & $7.8050 \pm 0.5391$ \\
8  & $3.6793 \pm 0.0171$ & $3.6821 \pm 0.0329$ & $3.7952 \pm 0.0661$ & $3.7467 \pm 0.0731$ \\
16 & $3.6328 \pm 0.0150$ & $3.6513 \pm 0.0212$ & $3.6391 \pm 0.0564$ & $3.6594 \pm 0.0687$ \\
\bottomrule
\end{tabular}
}
\end{table}
\begin{table}[hbpt!]
\centering
\caption{Mixing time steps mean $\pm$ std dev. for \textbf{Cycle Sequence}.}
\label{tab:mixing-results-cycle}
\scalebox{0.5}{
\begin{tabular}{lcccc}
\toprule
\textbf{Heads} & L1 & L2 & L3 & L4 \\
\midrule
1  & $27.0384 \pm 2.4355$ & $47.8151 \pm 0.6754$ & $47.2291 \pm 1.3803$ & $43.0723 \pm 2.8604$ \\
4  & $3.6369 \pm 0.0161$ & $3.9092 \pm 0.0374$ & $45.7709 \pm 1.9865$ & $31.2925 \pm 3.3583$ \\
8  & $3.6426 \pm 0.0098$ & $3.7756 \pm 0.0290$ & $4.6735 \pm 0.0795$ & $47.5304 \pm 0.9180$ \\
16 & $3.5859 \pm 0.0118$ & $3.7820 \pm 0.0307$ & $4.3759 \pm 0.0583$ & $41.9292 \pm 1.9165$ \\
\bottomrule
\end{tabular}
}
\end{table}

\begin{table}[htbp!]
\centering
\caption{Fidelity $\%$ mean $\pm$ std dev. for \textbf{Copy Sequence}}
\label{tab:fidelity-results-copy}
\scalebox{0.5}{
\begin{tabular}{lcccc}
\toprule
\textbf{Heads} & L1 & L2 & L3 & L4 \\
\midrule
1  & $0.3000 \pm 0.1000$ & $0.0000 \pm 0.0000$ & $0.0000 \pm 0.0000$ & $0.0000 \pm 0.0000$ \\
4  & $0.4500 \pm 0.0700$ & $0.2800 \pm 0.0800$ & $0.1800 \pm 0.0500$ & $0.0100 \pm 0.0100$ \\
8  & $0.5000 \pm 0.0800$ & $0.3500 \pm 0.0800$ & $0.1900 \pm 0.0800$ & $0.1600 \pm 0.0600$ \\
16 & $0.5800 \pm 0.0600$ & $0.4900 \pm 0.0700$ & $0.3400 \pm 0.0700$ & $0.2500 \pm 0.0700$ \\
\bottomrule
\end{tabular}
}
\end{table}

\begin{table}[htbp!]
\centering
\caption{Fidelity $\%$ mean $\pm$ std dev. for \textbf{Cycle Sequence}}
\label{tab:fidelity-results-cycle}
\scalebox{0.5}{
\begin{tabular}{lcccc}
\toprule
\textbf{Heads} & L1 & L2 & L3 & L4 \\
\midrule
1  & $0.0000 \pm 0.0000$ & $0.0000 \pm 0.0000$ & $0.0000 \pm 0.0000$ & $0.0000 \pm 0.0000$ \\
4  & $0.5300 \pm 0.0800$ & $0.4800 \pm 0.0600$ & $0.0000 \pm 0.0000$ & $0.0000 \pm 0.0000$ \\
8  & $0.5400 \pm 0.0700$ & $0.6300 \pm 0.0500$ & $0.5300 \pm 0.0700$ & $0.0000 \pm 0.0000$ \\
16 & $0.6500 \pm 0.0600$ & $0.6000 \pm 0.0500$ & $0.5200 \pm 0.0500$ & $0.0000 \pm 0.0000$ \\
\bottomrule
\end{tabular}
}
\end{table}

\begin{figure}[htbp]
    \centering
    \subfigure[Mixing time for Sequence Copying]{%
        \includegraphics[width=0.38\textwidth]{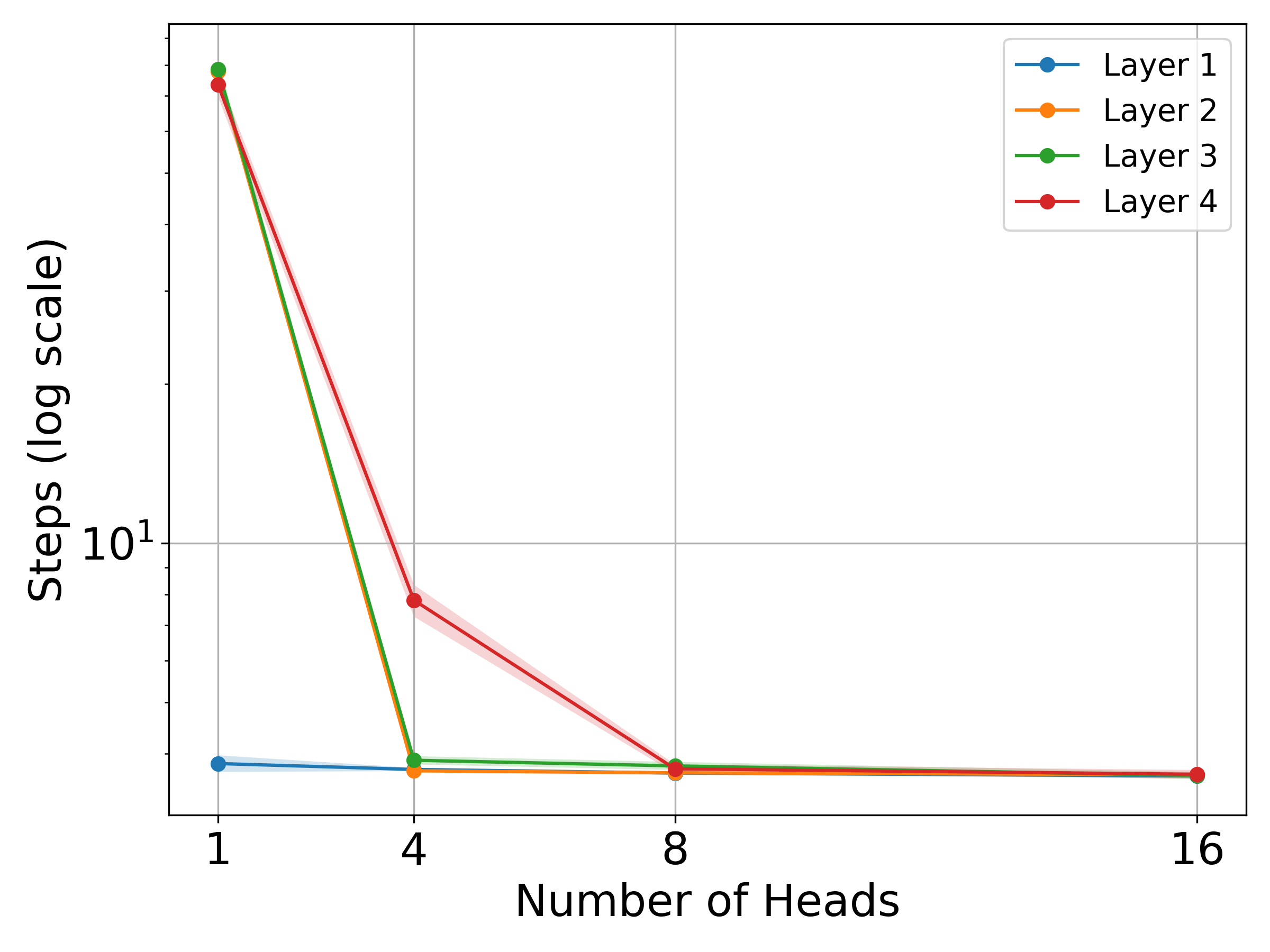}%
        \label{fig:mixing_copy}
    }\hfill
    \subfigure[Mixing time for Sequence Cycling]{%
        \includegraphics[width=0.38\textwidth]{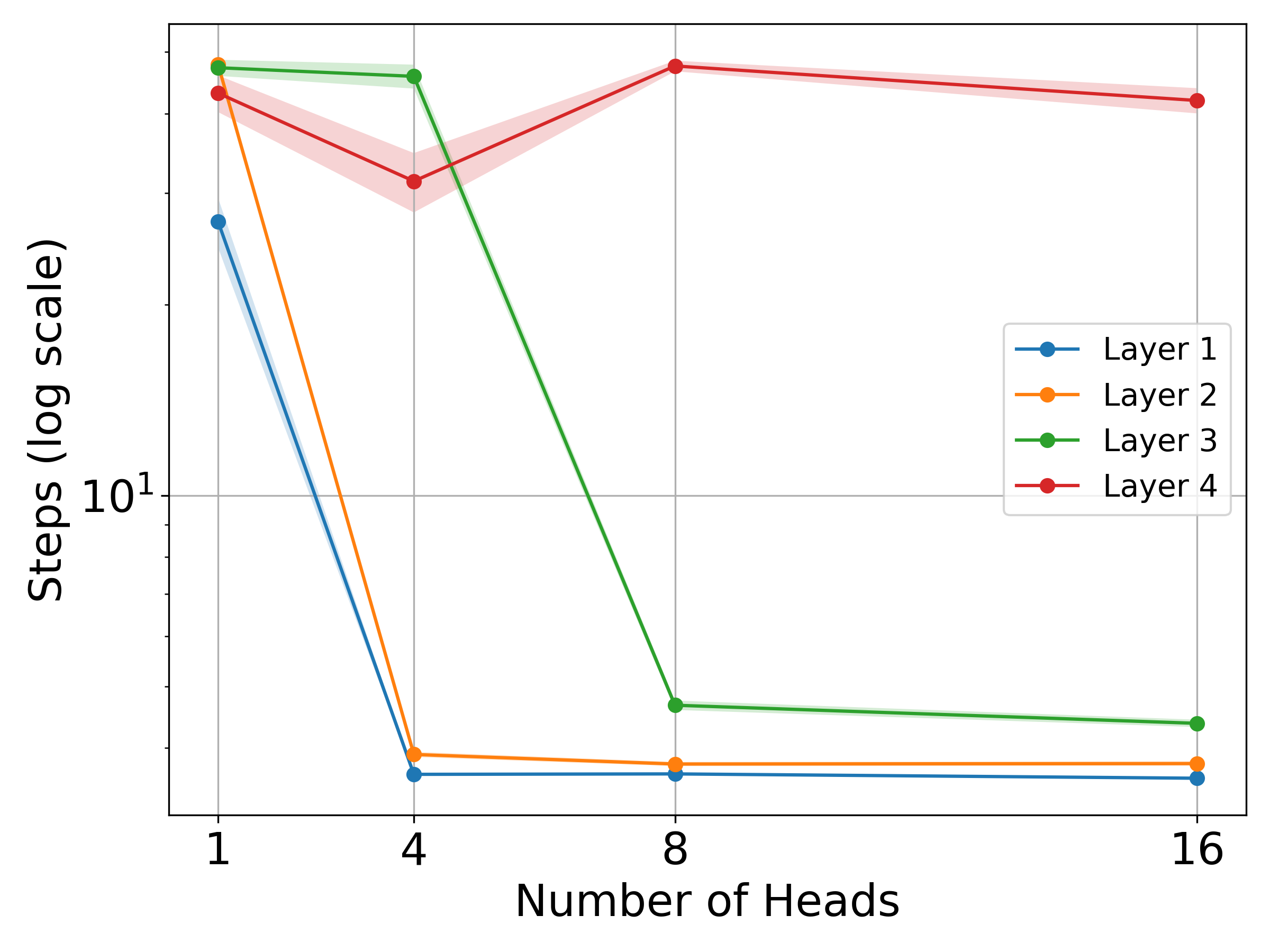}%
        \label{fig:mixing_cycle}
    }\\[2ex]
    \subfigure[Fidelity for Sequence Copying]{%
        \includegraphics[width=0.38\textwidth]{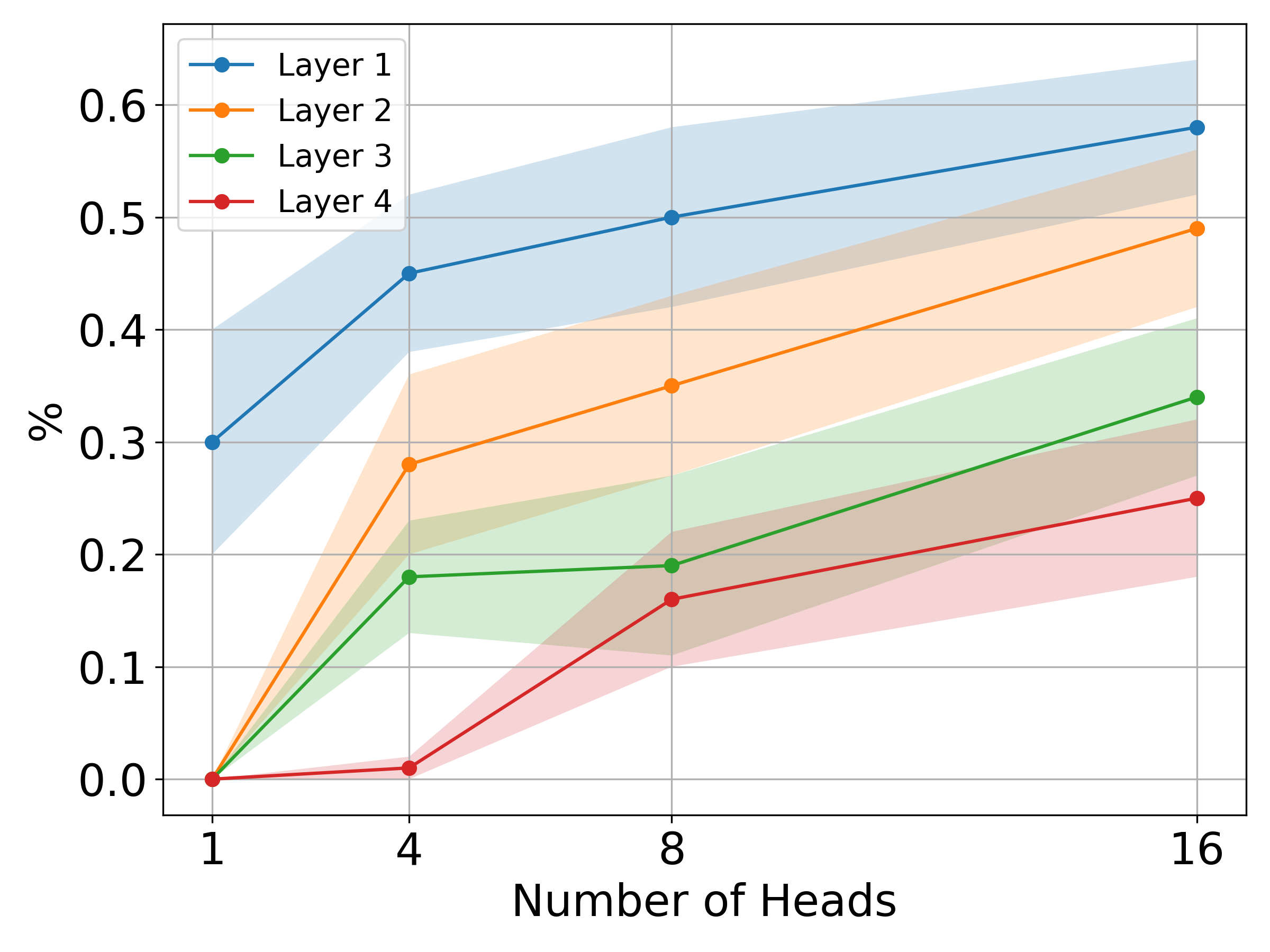}%
        \label{fig:fidelity_copy}
    }\hfill
    \subfigure[Fidelity for Sequence Cycling]{%
        \includegraphics[width=0.38\textwidth]{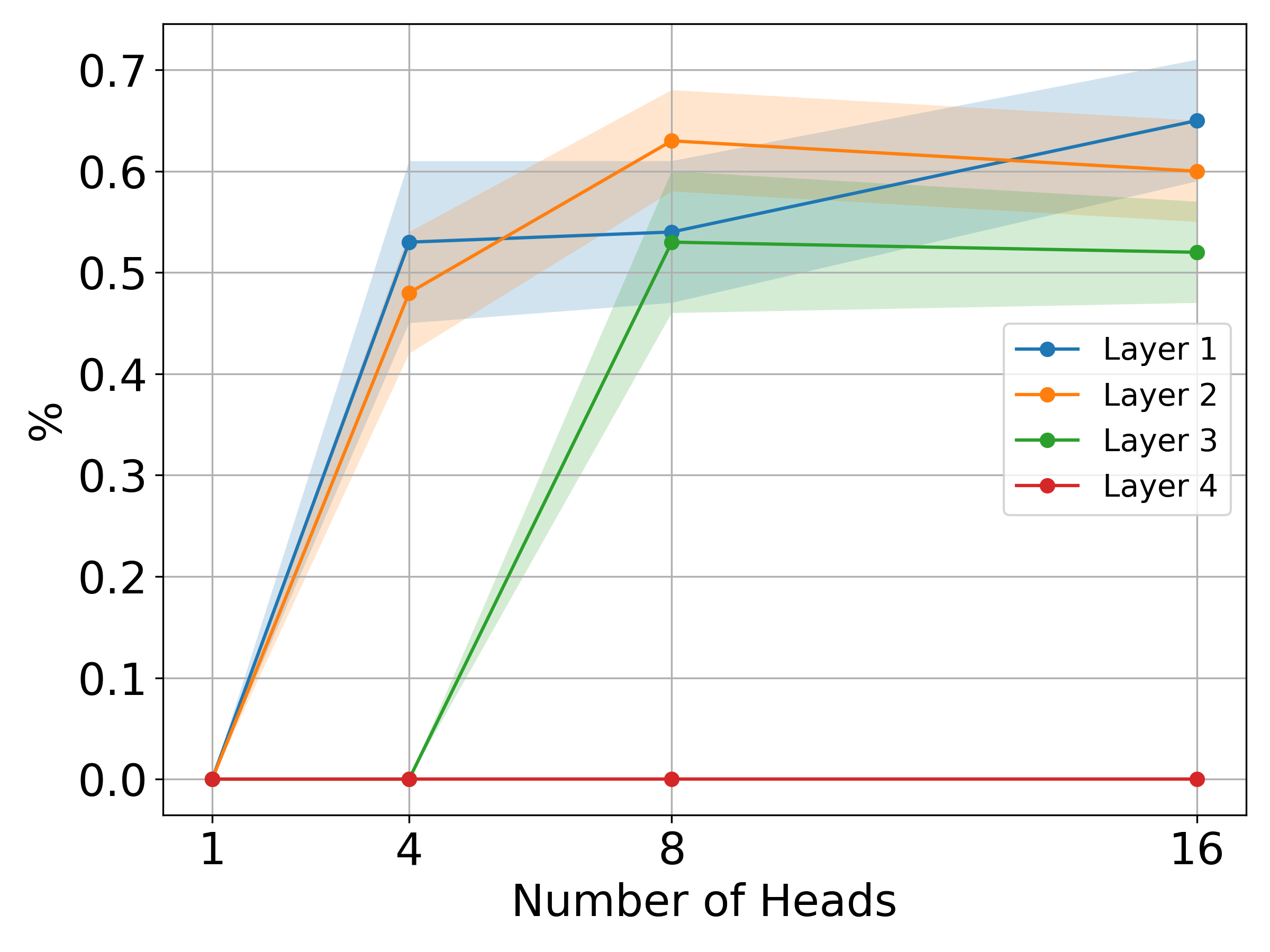}%
        \label{fig:fidelity_cycle}
    }
    \caption{Mixing time and fidelity for Transformer model trained on synthetic sequence manipulation tasks.}
    \label{fig:performance_eval}
\end{figure}

\begin{table}[htbp!] \centering \caption{Best individual vs combined multi-head fidelity $\%$ for \textbf{Copy Sequence}.} \label{tab:fidelity-comparison-copy} \scalebox{0.44}{ \begin{tabular}{lcccc} \toprule Heads & L1 & L2 & L3 & L4 \\ \midrule 4 (Individual) & $0.30$ & $0.11$ & $0.21$ & $0.00$ \\ 4 (Combined) & $\textbf{0.56}$ & $\textbf{0.43}$ & $0.19$ & $0.00$ \\ \midrule 8 (Individual) & $\textbf{0.59}$ & $0.29$ & $\textbf{0.18}$ & $\textbf{0.18}$ \\ 8 (Combined) & $0.51$ & $\textbf{0.37}$ & $0.10$ & $0.17$ \\ \midrule 16 (Individual) & $\textbf{0.59}$ & $\textbf{0.59}$ & $\textbf{0.52}$ & $0.22$ \\ 16 (Combined) & $\textbf{0.59}$ & $0.51$ & $0.46$ & $\textbf{0.23}$ \\ \bottomrule \end{tabular} } \end{table}

\begin{table}[htbp!] \centering \caption{Best individual vs combined multi-head fidelity $\%$ for \textbf{Cycle Sequence}.} \label{tab:fidelity-comparison-cycle} \scalebox{0.44}{ \begin{tabular}{lcccc} \toprule Heads & L1 & L2 & L3 & L4 \\ \midrule 4 (Individual) & $0.39$ & $0.39$ & $0.00$ & $0.00$ \\ 4 (Combined) & $\textbf{0.59}$ & $\textbf{0.50}$ & $0.00$ & $0.00$ \\ \midrule 8 (Individual) & $0.35$ & $0.49$ & $0.51$ & $0.00$ \\ 8 (Combined) & $\textbf{0.59}$ & $\textbf{0.72}$ & $\textbf{0.55}$ & $0.00$ \\ \midrule 16 (Individual) & $0.65$ & $0.49$ & $\textbf{0.68}$ & $0.00$ \\ 16 (Combined) & $\textbf{0.67}$ & $\textbf{0.55}$ & $0.48$ & $0.00$ \\ \bottomrule \end{tabular} } \end{table}

\clearpage

\section{Algorithmic Descriptions}
\label{app:Algorithmic Descriptions}

In this appendix, we include the algorithmic descriptions discussed in Section~\ref{sec:Experimental Validation}.

\begin{algorithm}[hbtp!]
\caption{Monte Carlo Mixing‐Time Proxy}
\label{alg:mixing-time-per-layer}
\SetKwInOut{Require}{Require}
\SetKwInOut{Compute}{Compute}
\Require{Dataset $D$, with samples indexed by $b \in \{1,\dots,|D|\}$, attention $\{A_b^{(h)}\}_{h=1}^H$, head weights $\{w_b^{(h)}\}_{h=1}^H$, sequence length $n$, number of simulations $S$, max steps $M$}
\Compute{Mixing times $T_{mix}$}
\BlankLine

  \For{$b \leftarrow 1$ \KwTo $|D|$}{
  Compute transition matrix $P_b
      \leftarrow \sum_{h=1}^H w_b^{(h)} \,\bigl(A_b^{(h)}\bigr)^\top$
  \For{$i \leftarrow 1$ \KwTo $n$}{
    \For{$s \leftarrow 1$ \KwTo $S$}{
      $T_{b,i,s} \leftarrow 0$ \quad $j \leftarrow i,\quad t \leftarrow 0$ \\
      \While{$j \neq n$ {and} $t < M$}{
        sample $j \sim P_b(j)$ \\
        $t \leftarrow t + 1$\;
      }
      $T_{b,i,s} \leftarrow T_{b,i,s} + t$\;
    }}}
    $T_{mix} \leftarrow \frac{1}{|D|} \frac1S \frac1n   \sum_{b=1}^{b=|D|}    \sum_{s=1}^{s=S} \sum_{i=1}^{i=n}  T_{b,i,s}$\;
  
\Return $T_{mix}$
\end{algorithm}

\begin{algorithm}[hbtp!]
\caption{Minimax Fidelity Proxy}
\label{alg:minimaxfidelity-per-layer}
\SetKwInOut{Require}{Require}
\SetKwInOut{Compute}{Compute}
\Require{
    Dataset $D$, with samples indexed by $b \in \{1,\dots,|D|\}$, attention matrices $\{A_b^{(h)}\}_{h=1}^{H}$ for each sample, head weights $\{w_b^{(h)}\}_{h=1}^{H}$, diffusion horizon $M$, target (sink) position $\tau$.
}
\Compute{Minimax Fidelity $\phi^{\mathrm{multi}}_{\min}$}
\BlankLine
    \For{$b \leftarrow 1$ \KwTo $|D|$} {
  $P_b \leftarrow \sum_{h=1}^H w_b^{(h)} \,\bigl(A_b^{(h)}\bigr)^\top$ \;
  $F_{b,m=0,ij} \leftarrow I$ \;
  \For{$m \leftarrow 1$ \KwTo $M$} {
    $F_{b,m,ij} \leftarrow \text{matmul}(F_{b,m-1,ij},P_b)$ \;
  }
}
$\phi^{\mathrm{multi}}_{\min} \leftarrow |D|^{-1}\sum_{b=1}^{|D|} \text{min}_j \, \text{max}_m \, F_{b,m,i=\tau j}$ \;

\Return $\phi^{\mathrm{multi}}_{\min}$
\end{algorithm}

\end{document}